\documentclass[11pt]{article}
\usepackage[font=libertinus,citestyle=authoryear]{kurbanlab}
\usepackage{needspace,colortbl}
\DeclareAffiliation{tamu}{Department of Electrical and Computer Engineering, Texas A\&M University, College Station, Texas, USA}
\DeclareAffiliation{tamuq}{Department of Electrical and Computer Engineering, Texas A\&M University at Qatar, Doha, Qatar}
\DeclareAffiliation{ankara}{Department of Prosthetics and Orthotics, Ankara University, Ankara, Turkey}
\DeclareAffiliation{hbku}{College of Science and Engineering, Hamad Bin Khalifa University, Doha, Qatar}

\newcommand{\Dset}{\mathcal{D}}
\newcommand{\Fset}{\mathcal{F}}
\newcommand{\Rset}{\mathcal{R}}

\newcommand{\Alg}{\mathcal{A}}
\newcommand{\Floor}{\Phi}
\newcommand{\Cert}{C}
\newcommand{\Probe}{\widehat{C}}
\newcommand{\Ex}{\mathbb{E}}

\definecolor{oiBlue}{RGB}{0,114,178}
\definecolor{oiOrange}{RGB}{230,159,0}
\definecolor{oiGreen}{RGB}{0,158,115}
\definecolor{oiRed}{RGB}{213,94,0}
\definecolor{oiGrey}{RGB}{110,110,110}

\title{Bounding Retraining Equivalence and the Deletion Floor in Materials Machine Unlearning}
\RunningTitle{Retraining Equivalence and the Deletion Floor}
\Author[orcid=0000-0002-1458-302X]{Can Polat}{tamu}
\Author[corresponding=kurbanm@ankara.edu.tr,orcid=0000-0002-7263-0234]{Mustafa Kurban}{tamuq,ankara}
\Author[orcid=0000-0001-9069-770X]{Erchin Serpedin}{tamu}
\Author[corresponding=hkurban@hbku.edu.qa,orcid=0000-0003-3142-2866]{Hasan Kurban}{hbku}
\Keywords{machine unlearning; materials; retraining equivalence; deletion floor; data redundancy}
\hypersetup{pdftitle={Bounding Retraining Equivalence and the Deletion Floor in Materials Machine Unlearning},pdfsubject={Materials machine unlearning preprint},pdfkeywords={machine unlearning, materials, retraining equivalence, deletion floor, data redundancy}}
\begin{document}
\maketitle

\begin{abstract}
In materials machine learning, closely related retained structures can sustain accurate property predictions even after removing a specific record, rendering post-deletion prediction error an ambiguous metric for machine unlearning. To resolve this ambiguity, we define the \emph{deletion floor} as the expected target loss under a specified retraining procedure at the deleted request. Standard indistinguishability constraints yield a sharp interval bounding an update's target loss around this baseline reference. Theoretically, a conditional neighbor bound links a low deletion floor directly to retained fit, prediction regularity, and local label agreement, while an exact ridge identity isolates residual fit from the prediction change induced by record deletion. Empirically, controlled redundancy sweeps show an $\approx8\times$ drop in median normalized retraining loss when one retained relative remains after deletion. Across two distinct fitting regimes in a paired Materials Project study, the lower-floor regime also exhibits a larger prediction change on more than $50\%$ of the shared requests. Systematic comparisons against approximate updates and the original model decouple deliberate target suppression from preserved overall model utility. Consequently, request-level unlearning evaluations should report reference loss, prediction change, and retained utility together, interpreting post-deletion accuracy against what retraining itself leaves behind.
\end{abstract}

\section{Introduction}
\label{sec:introduction}

Materials-property datasets map chemical compositions and crystal structures directly to thermodynamic targets \citep{ward2018matminer,jain2013materials}. Because ionic substitutions routinely preserve underlying crystal lattices across varying chemical compositions, physically interpretable relationships naturally form among distinct database entries \citep{hautier2011substitution}. Recent findings in data duplication reveal that retained related examples can sustain original model behavior even after exact retraining \citep{ye2025duplication}. Consequently, removing a specific record's training contribution differs fundamentally from eliminating the broader physical behavior represented by that record \citep{triantafillou2026untraining}.

Standard certified removal frameworks evaluate updated models by comparing their output distributions against references trained without the deleted records \citep{guo2020certified,sekhari2021remember}. In practice, however, widespread unlearning membership attacks frequently overestimate privacy guarantees \citep{hayes2024inexact}, while simple accuracy-gap measurements often conflict with distribution-based forgetting scores \citep{triantafillou2024competition}. Resolving what accuracy remains after deletion requires examining individual requests against a clear reference. Without establishing what exact retraining itself would predict for a removed target, continued post-deletion accuracy remains an ambiguous signal of unlearning performance.

To resolve this ambiguity, we study individual deletion requests through the \emph{deletion floor} defined as the expected target loss under a specified retraining procedure. When a model accurately fits retained data and exhibits smooth prediction behavior across neighboring inputs, structural and property redundancies preserve target predictability. This mechanism decouples retained support, reference matching, and deletion sensitivity. Mathematically, a conditional neighbor bound directly connects retained structural support to the deletion floor, while retraining equivalence restricts how far an update's loss can deviate from this reference. Furthermore, an exact ridge identity isolates residual model fit from the true prediction change induced by record deletion (Figure~\ref{fig:teaser}).

\begin{figure}[t]
\centering
\includegraphics[width=\linewidth]{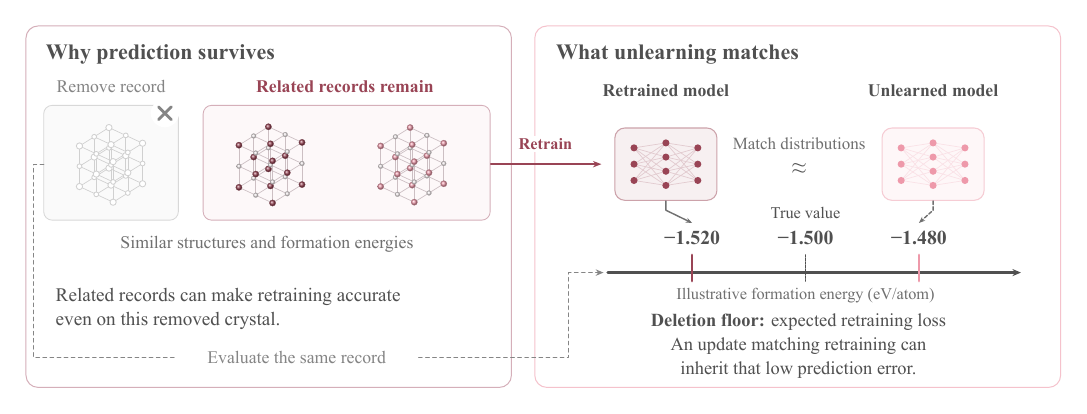}
\caption{\textbf{Related materials can keep a removed record predictable.} With good retained fit and smooth predictions, remaining relatives can keep retraining loss low; an update matching retraining can preserve that accuracy. The dashed path brings the removed record to the formation-energy comparison, where model arrows identify illustrative predictions beside its true value. The deletion floor is expected retraining loss at that record. Distributional certification requires model-law evidence beyond agreement of individual predictions.}
\label{fig:teaser}
\end{figure}

We validate these theoretical distinctions through a controlled redundancy sweep alongside $44{,}344$ crystal structures from the Materials Project (MP) \citep{jain2013materials}. Exact fixed-feature ridge calculations evaluate the floor and prediction change on identical requests, while approximate updates and unedited baselines expose how target suppression trades off against overall model utility. This work delivers three key contributions:
\begin{itemize}\itemsep1pt
\item \textbf{Derives a request-level reference interval:} Formulates a sharp bounded-loss interval that precisely identifies the target losses compatible with approximate retraining equivalence.
\item \textbf{Establishes a theoretical mechanism for retained predictability:} Proves a conditional neighbor bound linking the deletion floor to retained support, and derives an exact ridge decomposition that separates residual fit from deletion-induced sensitivity.
\item \textbf{Proposes a robust evaluation protocol:} Demonstrates via synthetic redundancy sweeps and materials benchmarks why reference loss, prediction change, and preserved utility measure distinct properties and must be reported independently.
\end{itemize}

The remainder of this paper is organized as follows: Section~\ref{sec:related} positions our contributions within the unlearning literature, Section~\ref{sec:method} presents the mathematical characterization, and Section~\ref{sec:experiments} details experiments across synthetic sweeps, materials fitting regimes, and approximate update procedures. Section~\ref{sec:implications} outlines the proposed request-level reporting protocol. Appendix~\ref{app:proofs} provides mathematical proofs, Appendix~\ref{app:protocol} specifies experimental parameters, and Appendix~\ref{app:results} presents complete operating-point summaries and neural descriptor diagnostics.
\section{Related work}
\label{sec:related}

\paragraph{Retraining objectives and evaluation.}
Retraining establishes the baseline benchmark for eliminating a record's training contribution, with certified removal requiring that an updated model's distribution remain statistically indistinguishable from one trained entirely without that record \citep{guo2020certified,sekhari2021remember}. Because exact retraining is computationally expensive, practical approximate algorithms frequently trade theoretical guarantees for efficiency; SCRUB, for instance, balances forgetting objectives against retained utility without providing a distributional certificate \citep{kurmanji2023unbounded}. Consequently, robust evaluation hinges on clarifying both the target objective and the specific properties measured by an evaluation metric. Existing membership inference attacks frequently overestimate privacy protections \citep{hayes2024inexact}, while simple accuracy-gap metrics often diverge from distribution-based forgetting scores \citep{triantafillou2024competition}. Evaluating post-deletion performance thus requires isolating the exact baseline error left by retraining at individual deletion requests.

\paragraph{Duplication, memorization, and retained support.}
Data redundancy presents a distinct challenge for unlearning algorithms. Empirical studies by \citet{ye2025duplication} demonstrate that adversarially duplicated or near-duplicated training instances can preserve original model behavior even after exact retraining. Similarly, \citet{rum2024hard} connect the difficulty of approximate unlearning to data memorization and retain--forget entanglement, showing that low memorization leaves a model's prediction largely insensitive to the inclusion or exclusion of a specific training point.

\paragraph{Broader behavioral removal.}
Removing a specific record's training influence represents a distinct objective from eliminating the overarching behavioral concept or domain knowledge associated with that record \citep{triantafillou2026untraining}. Correlated-knowledge unlearning evaluates how inference persists through retained structural relationships \citep{wu2024deepunlearning}, whereas retain--forget entanglement focuses on unwanted collateral damage inflicted on related retained samples during forgetting \citep{cheng2026entanglement}. Additionally, distributional unlearning selects data subsets to excise entire target distributions while leaving complementary distributions intact \citep{allouah2026distributional}. Together, these formulations highlight the need to distinguish between broader distribution removal and sample-level reference agreement.

\paragraph{Positioning of this work.}
This work investigates the baseline target loss left by exact retraining at individual deletion requests. By defining the \emph{deletion floor}, we complement existing update algorithms and evaluation metrics by identifying the exact target behavior constrained by reference matching. Building on prior observations of persistence under data duplication and low memorization, we formalize retained support as a theoretical mechanism, connecting it directly to retraining equivalence, local neighbor fit, and ridge leverage. Our bounded-loss transfer theorem delineates the precise loss intervals permitted under indistinguishability, while our exact ridge decomposition decouples retraining accuracy from deletion-induced prediction changes. Paired materials experiments confirm that lower reference loss can coexist with larger prediction sensitivity. Ultimately, these findings establish a reference-based evaluation protocol for regression unlearning, while leaving broader knowledge and distribution removal as distinct objectives.
\section{Bounding loss around the retraining reference}
\label{sec:method}

\subsection{The deletion floor and bounded-loss transfer}
We first define the retraining reference following data deletion. Let $\Dset=((x_i,y_i))_{i=1}^n$ denote an indexed dataset, where $\Fset$ specifies requested deletion records and $\Rset$ contains remaining indices. Explicit indexing keeps identical record copies distinct. Applying a training procedure $\Alg_{\mathrm{tr}}$ to $\Rset$ induces a model distribution law $Q$ over training randomness. At a target deletion request $(x_f,y_f)$, let $h(\theta)=\ell(f_\theta(x_f),y_f)$ be a nonnegative measurable loss with finite expectation under $Q$.

\begin{definition}[Deletion floor]
\label{def:floor}
The deletion floor is the expected target loss under the specified retraining procedure,
\[
\Floor(x_f;\Fset,\Dset)=\Ex_{\theta\sim Q}[h(\theta)].
\]
\end{definition}
Definition~\ref{def:floor} quantifies the expected residual error left at the deleted target by retraining. Under squared loss, a low floor indicates that exact retraining inherently predicts the deleted target accurately. Because this benchmark reflects the specific retraining procedure $\Alg_{\mathrm{tr}}$, alternative estimators may yield losses above or below this reference.

To evaluate what an update algorithm constrained by retraining equivalence can alter, let $P$ represent its output law on the same measurable model space. Symmetric $(\epsilon,\delta)$ retraining equivalence requires $P(S)\le e^\epsilon Q(S)+\delta$ and $Q(S)\le e^\epsilon P(S)+\delta$ across all measurable sets $S$ \citep{guo2020certified}. By translating standard hypothesis-testing constraints to target loss observables \citep{kairouz2015composition}, we establish sharp bounds on the update's expected loss relative to the floor.

\begin{theorem}[Sharp bounded-loss transfer]
\label{thm:floor}
Let $B>0$, $\epsilon\ge0$, $\delta\in[0,1]$, and let $h$ be a measurable observable taking values in $[0,B]$. If $P,Q$ satisfy symmetric $(\epsilon,\delta)$ retraining equivalence and $\Floor=\Ex_Q[h]$, then
\begin{align*}
\max\!\left\{0,e^{-\epsilon}(\Floor-\delta B),B-e^\epsilon(B-\Floor)-\delta B\right\}
&\le\Ex_P[h]\\
&\le\min\!\left\{B,e^\epsilon\Floor+\delta B,B-e^{-\epsilon}(B-\Floor-\delta B)\right\}.
\end{align*}
For each fixed $(\epsilon,\delta,B,\Floor)$ with $0\le\Floor\le B$, both endpoints are attainable over bounded observables and admissible pairs, converging to $\Floor$ as $(\epsilon,\delta)\to(0,0)$.
\end{theorem}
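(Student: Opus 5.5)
Our plan is the layer-cake representation $\Ex[h]=\int_0^B \Pr(h>t)\,dt$ for a $[0,B]$-valued observable, with the equivalence constraints applied to the superlevel sets $S_t=\{h>t\}$, which are measurable because $h$ is.

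\emph{Upper bound.} For each $t$, the forward constraint gives $P(S_t)\le \min\{1,e^\epsilon Q(S_t)+\delta\}$. Integrating over $[0,B]$:
\begin{itemize}
\item dropping the $\min$ yields $e^\epsilon\Floor+\delta B$;
\item keeping only the constant $1$ yields $B$.
\end{itemize}
For the third term we apply the reverse constraint to the complements $S_t^c$. Since $Q(S_t^c)\le e^\epsilon P(S_t^c)+\delta$, we get $1-P(S_t)\ge e^{-\epsilon}(1-Q(S_t)-\delta)$. Integrating gives $B-\Ex_P[h]\ge e^{-\epsilon}(B-\Floor-\delta B)$, which is the stated third upper term. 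The lower bound follows symmetrically:
\begin{itemize}
\item the reverse constraint on $S_t$ gives $\Ex_P[h]\ge e^{-\epsilon}(\Floor-\delta B)$;
\item the forward constraint on $S_t^c$ gives $B-\Ex_P[h]\le e^\epsilon(B-\Floor)+\delta B$;
\item the remaining term is the trivial bound $\Ex_P[h]\ge 0$.
\end{itemize}
Equivalently, one can apply the bounds to $h$ and to $B-h$, using linearity and $\Ex[B-h]=B-\Floor$.

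\emph{Sharpness.} Here we must exhibit, for given $(\epsilon,\delta,B,\Floor)$, an admissible pair and an observable that attain each endpoint. We take a two-point observable $h=B\mathbf 1_S$, with $Q(S)=q:=\Floor/B$, on a model space of at most three or four atoms. By the standard hypothesis-testing characterization of $(\epsilon,\delta)$-indistinguishability \citep{kairouz2015composition}, the set of achievable $(Q(S),P(S))$ is exactly the region cut out by the four linear constraints
\[
p\le e^\epsilon q+\delta,\qquad 1-q\le e^\epsilon(1-p)+\delta,
\]
together with their swaps. The largest feasible $p$ for a given $q$ is therefore the minimum of $1$, $e^\epsilon q+\delta$, and $1-e^{-\epsilon}(1-q-\delta)$, which matches the upper endpoint divided by $B$; the lower endpoint is analogous.

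The main obstacle is showing that this extremal $p$ is actually realized by a genuine pair $P,Q$ satisfying the inequality for \emph{all} measurable sets, not just for $S$ and $S^c$. We would construct it explicitly, following the Kairouz et al.\ canonical mechanism. Put a $\delta$-mass atom where $P$ and $Q$ disagree, and split the remaining mass across $S$ and $S^c$ with ratio $e^\epsilon$. We then verify the inequalities on every subset of the finite space, which reduces to checking atoms and unions. Some case analysis on which constraint is active is unavoidable. In degenerate cases the required $p$ is clipped at $0$ or $1$; there we reuse the same structure with fewer atoms.

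\emph{Convergence.} As $(\epsilon,\delta)\to(0,0)$, each of the expressions $e^\epsilon\Floor+\delta B$ and $B-e^{-\epsilon}(B-\Floor-\delta B)$ tends to $\Floor$, and all of them remain $\le B$. Hence the minimum tends to $\Floor$, and likewise the maximum on the lower side. This completes the plan.
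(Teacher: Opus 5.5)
Your derivation of the interval is the paper's argument: layer-cake over the superlevel sets $\{h>t\}$, the forward and reverse constraints applied to $h$ and to $B-h$, and intersection with $[0,B]$. Your sharpness plan also starts from the paper's observable $h=B\mathbf 1_S$ with $Q(S)=q=\Floor/B$. However, you leave the realization step, which you call ``the main obstacle,'' unresolved. The multi-atom canonical-mechanism construction you sketch for it is unnecessary. As described, it also does not do the job: splitting the non-$\delta$ mass across $S$ and $S^c$ in ratio $e^\epsilon$ pins $Q(S)$ to values determined by $(\epsilon,\delta)$. It therefore does not obviously give an arbitrary prescribed $q$ without case-by-case redesign.

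The missing observation is that a two-point sample space closes this step immediately, and this is what the paper does. Take $\Omega=\{\omega_1,\omega_2\}$, $S=\{\omega_1\}$, $Q(S)=q$, $P(S)=p$. The only measurable sets are $\emptyset$, $S$, $S^c$, $\Omega$. The constraints for $\emptyset$ and $\Omega$ read $0\le\delta$ and $1\le e^\epsilon+\delta$, which always hold. So symmetric $(\epsilon,\delta)$ equivalence is \emph{equivalent} to your four linear inequalities. There are no further subsets to check and no degenerate cases to treat separately, including $q\in\{0,1\}$.

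You should also note that $p=q$ satisfies all four inequalities, so the feasible set of $p$ is a nonempty interval. Only then is the minimum of the upper constraints attained by a $p$ that also satisfies the lower constraints, and symmetrically for the lower endpoint. Your ``therefore'' silently assumes this.

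Finally, there is a small slip in the convergence step. It is not true that every expression stays $\le B$; for instance, $e^\epsilon\Floor+\delta B>B$ when $\Floor=B$ and $\epsilon>0$. What you actually need is that the limit satisfies $\Floor\le B$, so the constant $B$ in the minimum cannot pull it below $\Floor$. Symmetrically, $\Floor\ge 0$ handles the constant $0$ in the maximum.
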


Theorem~\ref{thm:floor} bounds the expected target loss deviation permitted under approximate equivalence. Applying the event inequalities to $h$ and $B-h$ yields the stated interval. Its endpoints converge to $\Floor$ as $(\epsilon,\delta)\to(0,0)$, while the two-point construction in Appendix~\ref{app:transfer} establishes sharpness.

\paragraph{Applying the guarantee to squared error.}
Because standard squared error is unbounded, we apply Theorem~\ref{thm:floor} by defining a clipped loss observable $h_B(\theta)=\min\{(f_\theta(x_f)-y_f)^2,B\}$ for a fixed threshold $B>0$. Measurable predictions yield integrable clipped losses $0\le h_B\le B$ under both $P$ and $Q$, anchored by the clipped reference expectation $\Floor_B=\Ex_Q[h_B]$. While theoretical bounds govern this clipped observable, our empirical analyses report unclipped raw squared loss to describe full retraining behavior. Local target agreement reflects retraining equivalence, but complete distributional equivalence requires model-law validation beyond scalar agreement at a single point.

\paragraph{A target-error threshold.}
When an unlearning objective requires increasing target error to $\Ex_P[h]\ge\tau$ (where $0<\Floor<\tau<B$) under pure $(\epsilon,0)$ equivalence, both upper bounds in Theorem~\ref{thm:floor} must simultaneously reach $\tau$. Solving for $\epsilon$ establishes the necessary equivalence budget:
\begin{equation}
\epsilon\ge\max\!\left\{\log\frac{\tau}{\Floor},\ \log\frac{B-\Floor}{B-\tau}\right\}.
\label{eq:target-threshold}
\end{equation}
The first term quantifies the required loss increase from $\Floor$ to $\tau$, while the second reflects the corresponding reduction in the complement expectation $B-h$. For example, elevating expected unit-bounded loss from $\Floor=0.1$ to $\tau=0.5$ requires $\epsilon\ge\log 5$. Equation~\eqref{eq:target-threshold} provides a necessary compatibility condition between target-suppression goals and formal equivalence guarantees; relaxed guarantees with $\delta>0$ introduce additive slack across the full interval.

\subsection{How retained support can keep the floor low}
To understand why retraining often leaves deleted targets predictable, consider a retained neighbor $(x_r,y_r)$ near the deleted input $x_f$ under distance metric $d$. If a retrained model accurately fits $x_r$, varies smoothly between inputs, and target labels align, the retained example constrains prediction error on the removed target.

\begin{assumption}[Uniform retained-neighbor control]
\label{as:lip}
For a fixed retained neighbor $(x_r,y_r)$, assume $f_\theta(x_f)$ and $f_\theta(x_r)$ are measurable in $\theta$. For $Q$-almost every $\theta$, suppose $|f_\theta(x_f)-f_\theta(x_r)|\le\bar Ld(x_f,x_r)$ and $|f_\theta(x_r)-y_r|\le\bar\varepsilon$ for fixed $\bar L,\bar\varepsilon\ge0$.
\end{assumption}
Bounding the input distance $d(x_f,x_r)\le\rho$ and label gap $|y_f-y_r|\le\eta$, target error decomposes into retained fit error, prediction variation, and label disparity:
\begin{equation}
\begin{aligned}
|f_\theta(x_f)-y_f|
&\le \underbrace{|f_\theta(x_r)-y_r|}_{\text{retained fit}}
 +\underbrace{|f_\theta(x_f)-f_\theta(x_r)|}_{\text{prediction variation}}
 +\underbrace{|y_r-y_f|}_{\text{label gap}}\\
&\le\bar\varepsilon+\bar L\rho+\eta\equiv\Cert(x_f).
\end{aligned}
\label{eq:neighbor-bound}
\end{equation}
Under Assumption~\ref{as:lip}, taking expectations bounds the absolute-loss floor by $\Cert$, while squaring first bounds the squared-loss floor by $\Cert^2$. Consequently, high retained accuracy, smooth local predictions, and small label gaps jointly guarantee a low deletion floor. For finite neighbor sets, the tightest valid bound holds almost surely across retrained models.

\paragraph{An empirical diagnostic.} 
We construct a request-level surrogate for the theoretical bound via
\[
\Probe(x_f)=\widehat\varepsilon+\widehat Ld(x_f,x_r)+|y_f-y_r|,
\]
where $x_r$ represents the nearest retained neighbor in standardized descriptor space. Here, $\widehat\varepsilon$ is estimated from the original model's $90$th-percentile absolute training residual, and $\widehat L$ is set to the $99.5$th percentile of sampled prediction difference quotients. Because its components are estimated from original-model quantiles rather than uniform bounds over retrained models, \textsc{FloorScore} is a ranking diagnostic, not a certified upper bound on the deletion floor. Neighbor selection is performed prior to evaluating deletion losses, avoiding direct use of the observed deletion losses in constructing the proxy.

\subsection{An exact ridge connection: fit, leverage and deletion}
While neighbor bounds establish sufficient conditions for low floors, ridge regression enables exact calculation of retraining residuals and decouples baseline model fit from deletion sensitivity. Consider a feature matrix $A\in\mathbb R^{n\times p}$ with rows $a_i^\top$, target vector $y\in\mathbb R^n$, and objective $n^{-1}\|Aw-y\|^2+\lambda\|w\|^2$ ($n\ge2$, $\lambda>0$) minimized by $w_\Dset$. Because row deletion alters mean-loss normalization from $n\lambda$ to $(n-1)\lambda$, we define an adjusted full-data estimator $\widetilde w$ using the retained-size penalty coefficient:
\begin{equation}
M=A^\top A+(n-1)\lambda I,\qquad
\widetilde w=M^{-1}A^\top y,\qquad
\widetilde h_i=a_i^\top M^{-1}a_i.
\label{eq:ridge-adjust}
\end{equation}
The adjusted fit $\widetilde w$ maintains all training rows while adopting the penalty scaling of the retained system. Applying the Sherman--Morrison identity \citep{hager1989updating} yields an exact closed-form expression for the retrained residual $r_{-i}$ and corresponding deletion floor $\Floor_i$:
\begin{equation}
r_{-i}:=a_i^\top w_{-i}-y_i
=\frac{a_i^\top\widetilde w-y_i}{1-\widetilde h_i},
\qquad \Floor_i=r_{-i}^2.
\label{eq:ridge-loo}
\end{equation}
The denominator is strictly positive because
$M-a_i a_i^\top=A_{-i}^\top A_{-i}+(n-1)\lambda I$
is positive definite, which implies $\widetilde h_i<1$. Equation~\eqref{eq:ridge-loo} reveals that retraining error is governed jointly by the adjusted full-data residual $a_i^\top\widetilde w-y_i$ and the deleted point's leverage $\widetilde h_i$. A small adjusted residual yields a low deletion floor provided leverage remains bounded away from one, whereas high full-data residual error carries over into large retraining loss.

Comparing predictions before and after deletion isolates model sensitivity to record removal. Defining full-model residual $r_i=a_i^\top w_\Dset-y_i$, deletion response is captured by target prediction change $|r_i-r_{-i}|$. Because squared losses discard residual signs, evaluating prediction change requires signed residual tracking. We decompose the total prediction change into penalty rescaling and row removal components:
\begin{equation}
a_i^\top(w_{-i}-w_\Dset)
=\underbrace{\lambda a_i^\top M^{-1}w_\Dset}_{\text{penalty rescaling}}
+\underbrace{\frac{\widetilde h_i\widetilde r_i}{1-\widetilde h_i}}_{\text{removing the row}},
\label{eq:ridge-change}
\end{equation}
where $\widetilde r_i=a_i^\top\widetilde w-y_i$. Equation~\eqref{eq:ridge-change} demonstrates that a highly accurate model fit can still exhibit strong prediction sensitivity upon record removal, motivating joint measurement of retraining floors and deletion changes.

\paragraph{Why the no-update reference matters.}
The unedited original model serves as a necessary control because baseline predictions may already align closely with retraining outcomes. To formalize this intuition, consider $n\ge2$ and differentiable, $\mu$-strongly convex objectives ($\mu>0$) sharing a regularizer. Assuming per-example gradient norms are bounded by $G$ and model predictions are $L_p$-Lipschitz in parameters ($G,L_p\ge0$), unconstrained minimizers satisfy the leave-one-out stability bound:
\begin{equation}
|f_{w_\Dset}(x_f)-f_{w_\Rset}(x_f)|\le\frac{2GL_p}{\mu n}.
\label{eq:stability}
\end{equation}
The $O(1/n)$ scaling in Equation~\eqref{eq:stability} confirms that small original-to-retrained prediction shifts occur naturally in stable models, highlighting why no-update controls are indispensable when evaluating update agreement against retraining benchmarks.
\section{Experiments and results}
\label{sec:experiments}

Empirical evaluations demonstrate how data redundancy and fitting capacity shape the retraining reference, establishing the benchmark against which approximate unlearning updates and unedited baselines are assessed across matched requests.

\subsection{Shared reference and measurement protocol}
Evaluating deletion behavior requires tracking both loss relative to the retraining reference and physical prediction shifts. For each single-example deletion request $i$, we record the squared target loss $e_i$, its corresponding retraining floor $\Floor_i$, and the normalized error ratio $q_i=e_i/\max(\Floor_i,10^{-12})$. Predictive utility is measured by test MSE relative to the original model. To isolate distinct prediction shifts, let $\widehat y_i^D$, $\widehat y_i^R$, and $\widehat y_i^U$ denote predictions from the original, retrained, and updated models, respectively. The \emph{deletion-induced prediction change} $|\widehat y_i^D-\widehat y_i^R|$ measures original model sensitivity to record removal, while the \emph{update-to-reference prediction distance} $|\widehat y_i^U-\widehat y_i^R|$ measures update deviation from retraining. The scalar loss gap $g_i=|e_i-\Floor_i|$ relates directly to the normalized ratio via
\begin{equation}
g_i=\left|\max(\Floor_i,10^{-12})q_i-\Floor_i\right|.
\label{eq:dependent-metrics}
\end{equation}
The safeguard denominator $10^{-12}$ ensures numeric stability for near-zero reference losses without altering observed ratios, as all measured ridge floors exceed $1.12\times10^{-7}$~(eV/atom)$^2$. Thus, Equation~\eqref{eq:dependent-metrics} unifies target loss comparisons while prediction distances capture directional shifts.

To isolate data-level effects, the materials ridge reference refits weights on retained rows while holding feature standardization, RFF kernel bandwidth, and random projections fixed \citep{rahimi2007random}. This conditional procedure, denoted $\Alg_{\mathrm{tr}}(\cdot;T_\Dset)$, preserves the request's contribution to learned preprocessing $T_\Dset$. Under deterministic ridge fitting, the squared residual equals the expected loss floor under fixed hyperparameters, with across-request medians reporting central tendencies. Score thresholds are calculated within each diagnostic sample, and request-level distributions accompany aggregate medians.

\subsection{Retraining residuals across controlled redundancy settings}
Retained family members substantially reduce retraining loss even after a target record is removed. We evaluate synthetic datasets constructed with $90$ prototype families and $160$ isolated points in a six-dimensional descriptor space, where labels generated from a smooth random-feature function are corrupted by additive noise. Across family sizes $k\in\{1,2,4,8,16\}$ and total dataset sizes $90k+160$, we assess $300$ independent trials, deleting one family member and one isolated control separately and normalizing floors by original test MSE.

Transitioning from $k=1$ (where deletion removes the sole family member) to $k=2$ (leaving one retained relative) produces a dramatic $7.934\times$ drop in median normalized retraining loss, from $0.326$ down to $0.041$. Additional family members yield diminishing loss reductions, reaching a median floor of $0.020$ ($\approx 2\%$ of test MSE) at $k=16$. In contrast, isolated controls retain higher median normalized retraining losses than family targets throughout the sweep, consistent with retained family support contributing to lower target loss.

\subsection{Materials floors depend on the fitting regime}
Deletion floors vary systematically with model capacity and regularization. We analyze $44{,}344$ crystal structures ($35{,}476$ train, $8{,}868$ test) from the MP using 134 structural and chemical descriptors targeting formation energy per atom. Two distinct ridge regimes evaluate exact retained residuals: a $2{,}048$-feature model ($\lambda=10^{-6}$, 200 diagnostic requests) and an $8{,}192$-feature model ($\lambda=10^{-8}$, 300 diagnostic requests), coupling feature count and regularization across separate diagnostic samples.

\begin{figure}[!t]
\centering
\includegraphics[width=\linewidth]{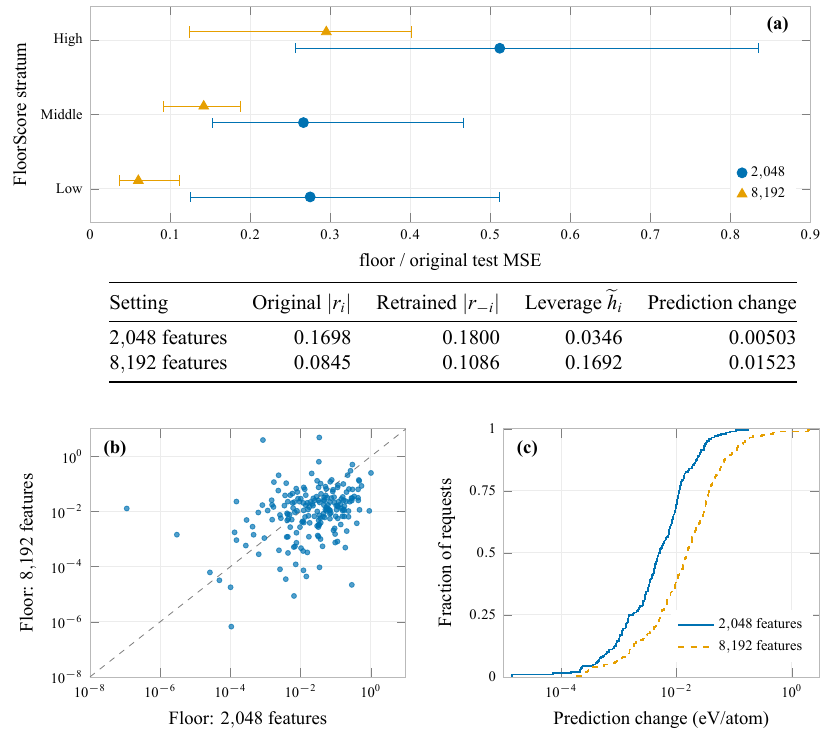}
\caption{\textbf{Lower retraining loss can accompany larger prediction changes after deletion.} (a) Median normalized floors across \textsc{FloorScore} strata for the $2{,}048$-feature ($\lambda=10^{-6}$) and $8{,}192$-feature ($\lambda=10^{-8}$) fits; error bars are $95\%$ request-bootstrap intervals. Their diagnostic samples contain $200$ and $300$ requests, respectively, and differ across fits. The embedded table reports medians on a separate paired cohort of $200$ shared requests: original and retrained absolute residuals and prediction changes are in eV/atom, while adjusted leverage is dimensionless. (b) Paired squared retraining losses in (eV/atom)$^2$; $132$ points fall below equality. (c) Empirical distributions of deletion-induced prediction changes; the $8{,}192$-feature setting has the larger change on $172$ requests. Feature count and regularization vary jointly.}
\label{fig:materials}
\end{figure}

The fitting regime strongly modulates the relationship between the local structural proxy \textsc{FloorScore} and actual retraining loss (Figure~\ref{fig:materials}(a)). In the $2{,}048$-feature setting, low-, middle-, and high-\textsc{FloorScore} requests exhibit median normalized floors of $0.275$, $0.267$, and $0.512$. The $8{,}192$-feature, lower-penalty regime lowers the corresponding medians to $0.0605$, $0.142$, and $0.295$—yielding a $4.88\times$ high-to-low contrast and rank correlation of $0.261$—while simultaneously reducing training MSE from $0.0754$ to $0.0166$ and test MSE from $0.0870$ to $0.0704$. These results show that \textsc{FloorScore} is a setting-dependent diagnostic whose empirical stratification varies with the fitting regime.

\subsection{Lower floors can coexist with larger deletion changes}
Evaluating identical deletion requests reveals that higher model accuracy does not imply lower deletion sensitivity. We compare both fitting regimes on a shared set of $200$ deletion requests using identical train/test splits.

The paired medians embedded in Figure~\ref{fig:materials} show that the $8{,}192$-feature setting has smaller original and retrained residuals but larger adjusted leverage and deletion-induced prediction change. Compared with the $2{,}048$-feature baseline, it achieves a lower retraining floor on $132$ of $200$ requests while exhibiting a larger prediction change on $172$ requests, dropping median squared retraining loss from $0.0324$ to $0.0118$~(eV/atom)$^2$ while increasing median prediction change from $0.00503$ to $0.0152$~eV/atom (Figure~\ref{fig:materials}(b)--(c)).

On $104$ individual requests, the $8{,}192$-feature, lower-penalty regime predicts removed targets more accurately after retraining despite moving predictions farther from the original fits. This decoupling highlights that retraining floors evaluate post-deletion accuracy, whereas prediction changes quantify target sensitivity.

\subsection{Target loss and retained utility at fixed operating points}
Approximate unlearning interventions reveal a trade-off between target suppression and preserved overall model utility. Using the shared cohort of $200$ requests in the $2{,}048$-feature setting ($\lambda=10^{-6}$), we compare approximate updates against exact retraining and the unedited no-update control (Table~\ref{tab:operating}).

Evaluated updates start from independent copies of the original model under fixed hyperparameter settings: NegGrad+ \citep{kurmanji2023unbounded} (30 updates combining target loss ascent with retain loss descent), SCRUB-style regression \citep{kurmanji2023unbounded} (40 alternating ascent/teacher-matching updates), Newton-style removal (1 damped Hessian step), retain fine-tuning (60 steps), random relabeling (40 steps), and gradient ascent (30 target ascent steps), all using learning rate $0.010$ and retained minibatches $\le 256$.

\begin{center}\small
\captionof{table}{\textbf{Increasing target error can also reduce predictive utility.} A target-loss ratio above one indicates greater error than retraining, and a test-MSE ratio above one indicates worse utility than the original model. Gradient ascent raises both ratios. Several other updates remain near the no-update control. Entries are medians over $200$ shared requests in full-corpus ridge with $2{,}048$ features and $\lambda=10^{-6}$ at the fixed update settings.}
\label{tab:operating}
\setlength{\tabcolsep}{7pt}
\begin{tabular}{@{}lrr@{}}\toprule
Procedure & Target loss / floor & Test MSE ratio\\\midrule
Retrain (reference) & $1.000$ & $1.000$\\
No update & $0.932$ & $1.000$\\
Retain fine-tuning & $0.930$ & $1.000$\\
Newton-style update & $0.934$ & $1.000$\\
Random relabeling & $0.928$ & $1.000$\\
SCRUB-style regression & $1.141$ & $1.001$\\
NegGrad+ & $1.645$ & $1.013$\\
Gradient ascent & $3.081$ & $1.083$\\
\bottomrule\end{tabular}\end{center}

The unedited control exhibits a median target-loss/floor ratio of $0.932$ and reference distance of $0.00503$~eV/atom. Retain fine-tuning, Newton-style updates, and random relabeling closely mirror the control's loss ratio and test utility. In contrast, gradient ascent forces target loss to $3.081\times$ the reference while increasing test MSE by $8.3\%$ ($1.083\times$ ratio). These results emphasize that target loss inflation alone does not signify successful unlearning unless evaluated alongside reference agreement and retained utility.

\section{Evaluation protocol and scope}
\label{sec:implications}

Unlearning evaluations must decouple post-deletion accuracy from record sensitivity: the deletion floor quantifies what a specified retraining reference predicts, whereas prediction change measures model sensitivity to record removal. Because higher post-retraining accuracy can coexist with larger prediction shifts, reporting both properties is necessary to contextualize unlearning performance.

\paragraph{A request-level reporting protocol.}
Evaluations should explicitly define the target deletion objective, training reference, and any refitted preprocessing steps. For each request, benchmarks should report original, retrained, and updated target losses, deletion-induced prediction changes, update-to-reference prediction distances (when available), and retained test utility, alongside an unedited original model as a no-update control. Retaining signed residuals enables exact recovery of prediction shifts under squared loss. Comparing predictions across original, retrained, and updated models isolates true update effects beyond baseline reference agreement, whereas proving formal retraining equivalence requires distributional certification beyond scalar point agreement.

\paragraph{Interpreting operating points.}
Evaluating approximate updates requires measuring target loss shifts against both reference floors and preserved utility. An update satisfying retraining equivalence naturally remains accurate when the reference deletion floor is low. Conversely, target-suppression interventions elevate loss above the floor, requiring separate utility accounting; for example, gradient ascent forces target loss higher while degrading overall test utility. Because squared loss discards residual signs and near-zero reference floors produce artificially inflated loss ratios, evaluations should report full request-level distributions and explicit loss normalizations alongside aggregate medians.

\paragraph{Scope and limitations.}
These findings reflect specific experimental and theoretical scopes. In controlled sweeps, global redundancy scales jointly with total dataset size, while in materials studies, feature dimension and regularization vary together under fixed descriptors, preprocessing, and model architectures. Crystal graph neural network diagnostics provide exploratory observations per request, and \textsc{FloorScore} serves as a setting-dependent proxy whose empirical rank correlation varies with model capacity. Additionally, leave-one-out stability bounds assume uniform gradient control. Broader behavioral removal, comparative hyperparameter tuning, distributional certification, and end-to-end refitting of learned representations remain distinct objectives requiring expanded frameworks. Within this scope, our protocol provides transparent accounting of retraining references and update behavior.

\section{Conclusion}
\label{sec:conclusion}

The deletion floor establishes an explicit baseline for evaluating regression unlearning by measuring the residual target error left by exact retraining. Retraining equivalence restricts updated models around this reference, retained structural support maintains low reference loss, and ridge leverage explains how accurate retraining can remain highly sensitive to record removal. Across synthetic redundancy sweeps and materials benchmarks, empirical measurements confirm that lower reference loss frequently coexists with larger prediction sensitivity. Consequently, continued post-deletion accuracy can only be interpreted relative to reference floors and target objectives. Standardizing unlearning evaluation requires reporting retraining predictions, record removal shifts, and retained model utility together.

\section*{Reproducibility statement}
The accompanying repository (\url{https://github.com/KurbanIntelligenceLab/materials-unlearning}) contains implementation details, data provenance, environment information, result records, and verification scripts. It also includes Lean 4 formalizations verifying bounded-loss transfer and sharpness, endpoint convergence, target-error thresholds, retained-neighbor bounds, ridge identities, and conditional leave-one-out stability.

\section*{Ethics statement}
This study uses synthetic data and public materials-property records, involving no human subjects or personal data. Demonstrating that retained structural support preserves target accuracy after deletion is a technical observation and should not be used to infer compliance with specific legal or regulatory data removal mandates. Applications seeking broader behavioral concept removal require evaluations tailored to that specific objective.

\section*{Use of AI statement}
Generative AI tools assisted with literature review, text polishing, code development, and data analysis. The submitting authors retain full responsibility for the final text, theoretical proofs, cited sources, code, and reported empirical findings.

\bibliography{delfloor}

\clearpage
\appendix
\section{Proofs of the main characterization}
\label{app:proofs}
\subsection{Proof of Theorem~\ref{thm:floor}: bounded-loss transfer}
\label{app:transfer}

The bounded-loss transfer theorem establishes how symmetric $(\epsilon,\delta)$ retraining equivalence restricts expected loss shifts relative to the retraining reference floor $\Floor$.

\begin{proof}[Proof of Theorem~\ref{thm:floor}]
Applying hypothesis-testing constraints on bounded observables \citep{kairouz2015composition}, let $a=e^\epsilon$. For any measurable loss $h\in[0,B]$, the layer-cake representation combined with distributional indistinguishability $P(S)\le aQ(S)+\delta$ yields
\[
\Ex_P[h]
=\int_0^B P(h>t)\,dt
\le a\int_0^B Q(h>t)\,dt+\delta B
=a\Floor+\delta B.
\]
Reversing the roles of $P$ and $Q$ gives the lower bound $\Ex_P[h]\ge a^{-1}(\Floor-\delta B)$. Applying these identical constraints to the complementary bounded observable $B-h$ establishes
\[
\Ex_P[h]\ge B-a(B-\Floor)-\delta B,
\qquad
\Ex_P[h]\le B-a^{-1}(B-\Floor-\delta B).
\]
Intersecting these four inequalities with the physical loss range $0\le\Ex_P[h]\le B$ completes the proof of the transfer interval.

To prove sharpness, normalize by $B$ and define $q=\Floor/B$. Consider a two-point sample space with observable $h=B\mathbf{1}_S$, setting $Q(S)=q$ and $P(S)=p$. Symmetric $(\epsilon,\delta)$-indistinguishability reduces to the four joint constraints
\[
p\le aq+\delta,\quad q\le ap+\delta,\quad
1-p\le a(1-q)+\delta,\quad 1-q\le a(1-p)+\delta,
\]
subject to $0\le p\le1$. The feasible interval for $p$ corresponds exactly to the normalized upper and lower bounds of Theorem~\ref{thm:floor}. Feasibility is guaranteed because $p=q$ satisfies all constraints for valid parameters. Endpoint attainability holds across all $0\le\Floor\le B$, including extreme points $q=0$ and $q=1$. As $(\epsilon,\delta)\to(0,0)$, the maximum defining the lower endpoint and the minimum defining the upper endpoint both converge to $\Floor$.
\end{proof}

\paragraph{Justification of Equation~\eqref{eq:target-threshold}.}
Under pure $(\epsilon,0)$ equivalence ($\delta=0$), requiring updated target error to reach $\Ex_P[h]\ge\tau$ (where $0<\Floor<\tau<B$) forces both upper bounds in Theorem~\ref{thm:floor} to be at least $\tau$, namely $e^\epsilon\Floor\ge\tau$ and $B-e^{-\epsilon}(B-\Floor)\ge\tau$. Taking logarithms yields the two necessary constraints in Equation~\eqref{eq:target-threshold}, defining the minimum equivalence budget $\epsilon$ compatible with the target error objective.

\subsection{Justification of the retained-neighbor bound}
\label{app:neighbor}

\begin{proof}[Justification of Equation~\eqref{eq:neighbor-bound}]
Under uniform retained-neighbor control (Assumption~\ref{as:lip}), target error at deletion request $x_f$ decomposes into retained neighbor fit, local prediction variation, and label disparity. For $Q$-almost every retrained predictor $\theta$, triangle inequality yields
\[
|f_\theta(x_f)-y_f|\le |f_\theta(x_f)-f_\theta(x_r)|+
|f_\theta(x_r)-y_r|+|y_r-y_f|\le\bar L\rho+\bar\varepsilon+\eta\equiv\Cert(x_f).
\]
Because $\Cert(x_f)$ is deterministic and nonnegative, taking expectations bounds absolute loss by $\Ex_Q[|f_\theta(x_f)-y_f|]\le\Cert(x_f)$, while squaring first yields squared loss bound $\Ex_Q[(f_\theta(x_f)-y_f)^2]\le\Cert(x_f)^2$. For clipped squared loss, expectation is bounded by $\min\{\Cert(x_f)^2,B\}$. For any finite collection of valid retained neighbors, intersecting their measure-one validity sets preserves a measure-one domain where the minimum bound remains valid.
\end{proof}

\subsection{Derivation of the ridge identities}
\label{app:ridge}

\begin{proof}[Derivation of Equations~\eqref{eq:ridge-loo} and~\eqref{eq:ridge-change}]
Define the retained normal matrix $H_{-i}=A_{-i}^\top A_{-i}+(n-1)\lambda I=M-a_ia_i^\top$. For $n\ge2$ and $\lambda>0$, both $H_{-i}$ and $M$ are positive definite. Setting $t=a_i^\top H_{-i}^{-1}a_i\ge0$, Sherman--Morrison rank-one update formulas yield adjusted leverage $\widetilde h_i=t/(1+t)<1$, ensuring positive leverage denominators $1-\widetilde h_i>0$.

The stationarity condition for retained ridge fitting is
\[
(M-a_ia_i^\top)w_{-i}=A^\top y-a_i y_i.
\]
Multiplying by $M^{-1}$ and isolating the weight vector gives $w_{-i}=\widetilde w+M^{-1}a_i(a_i^\top w_{-i}-y_i)$. Taking the inner product with $a_i^\top$, subtracting target $y_i$, and solving for residual $r_{-i}=a_i^\top w_{-i}-y_i$ yields the exact leave-one-out residual identity in Equation~\eqref{eq:ridge-loo}.

For prediction change, comparing full minimizer $(M+\lambda I)w_\Dset=A^\top y$ with adjusted fit $M\widetilde w=A^\top y$ gives parameter shift $\widetilde w-w_\Dset=\lambda M^{-1}w_\Dset$. Subtracting predictions yields
\[
a_i^\top(w_{-i}-w_\Dset)
=a_i^\top(\widetilde w-w_\Dset)+(r_{-i}-\widetilde r_i)
=\lambda a_i^\top M^{-1}w_\Dset+
\frac{\widetilde h_i\widetilde r_i}{1-\widetilde h_i},
\]
proving Equation~\eqref{eq:ridge-change}.
\end{proof}

Under an unnormalized penalty $\alpha>0$, full matrix $M=A^\top A+\alpha I$ uses unadjusted full fit $w_\Dset$ and standard leverage $h_i=a_i^\top M^{-1}a_i$, yielding residual ratio $r_{-i}=r_i/(1-h_i)$ and squared loss ratio $r_i^2/\Floor_i=(1-h_i)^2$. When penalty $\lambda$ is defined per sample mean loss, adjusted matrix $M$ provides the exact sample-size normalization correction.

\subsection{Leave-one-out stability: statement and proof}
\label{app:stability}

Leave-one-out stability bounds quantify how record deletion impacts parameter estimation in strongly convex objectives \citep{bousquet2002stability}. Let $n\ge2$, let $\ell_i(w)$ be example losses, and let $r(w)$ be a differentiable regularizer. Objective functions $F_\Dset(w)=\frac1n\sum_{i\in\Dset}\ell_i(w)+r(w)$ and $F_\Rset(w)=\frac1{n-1}\sum_{i\in\Rset}\ell_i(w)+r(w)$ are $\mu$-strongly convex ($\mu>0$) with minimizers $w_\Dset,w_\Rset$. Assuming loss gradients are bounded $\|\nabla\ell_i(w)\|\le G$ and predictions are $L_p$-Lipschitz in parameters, parameter shift satisfies $\|w_\Dset-w_\Rset\|\le2G/(\mu n)$, proving Equation~\eqref{eq:stability}.

\begin{proof}[Proof of the leave-one-out stability bound]
Stationarity of the retained objective gives $\nabla r(w_\Rset)=-(n-1)^{-1}\sum_{i\in\Rset}\nabla\ell_i(w_\Rset)$. Substituting into the full gradient $\nabla F_\Dset(w_\Rset)$ yields
\[
\nabla F_\Dset(w_\Rset)=\frac1n\left(\nabla\ell_f(w_\Rset)
-\frac1{n-1}\sum_{i\in\Rset}\nabla\ell_i(w_\Rset)\right),
\qquad \|\nabla F_\Dset(w_\Rset)\|\le\frac{2G}{n}.
\]
Strong convexity of $F_\Dset$ at stationarity $\nabla F_\Dset(w_\Dset)=0$ implies
\[
\mu\|w_\Rset-w_\Dset\|^2
\le\langle\nabla F_\Dset(w_\Rset),w_\Rset-w_\Dset\rangle
\le\frac{2G}{n}\|w_\Rset-w_\Dset\|.
\]
Dividing by $\|w_\Rset-w_\Dset\|$ yields $\|w_\Rset-w_\Dset\|\le 2G/(\mu n)$. Applying prediction Lipschitz continuity $|f_{w_\Dset}(x_f)-f_{w_\Rset}(x_f)|\le L_p\|w_\Dset-w_\Rset\|$ completes the proof of Equation~\eqref{eq:stability}.
\end{proof}

The $O(1/n)$ asymptotic decay relies on uniform gradient bounds $G$ and positive lower bound $\mu>0$. For mean-squared loss, $\mu=2\lambda$ provides strong convexity, though unrestricted regression lacks global gradient bounds. Thus, algorithmic stability provides a conditional theoretical mechanism for low prediction sensitivity, whereas exact ridge identities provide unconstrained exact calculations.

\section{Experimental details}
\label{app:protocol}

\subsection{Controlled family construction}
To systematically analyze the impact of structural redundancy on deletion floors, we construct a synthetic benchmark in a six-dimensional descriptor space using $300$ cosine Random Fourier Features (RFFs). Projection frequencies are drawn from a standard Gaussian distribution $N(0, I)$, phases are sampled uniformly over $[0, 2\pi)$, and latent target function weights are initialized from a Gaussian distribution with standard deviation $0.30$. Target evaluations share a fixed latent function and a held-out set of $1{,}500$ noiseless test points across all sweep evaluations.

Each generated dataset contains $90$ prototype family centers (coordinate standard deviation $1.5$) with within-family perturbations (standard deviation $0.05$) alongside $160$ isolated background points, yielding labels corrupted by additive Gaussian noise with standard deviation $0.02$. Model training uses ridge regression with regularization parameter $\lambda=10^{-4}$ under mean-squared error. Across family sizes $k\in\{1,2,4,8,16\}$ (scaling total dataset size as $90k+160$), we evaluate $300$ independently generated dataset draws, refitting ridge weights after deleting a family member and an isolated control separately. The resulting family-size sweep is summarized in Figure~\ref{fig:controlled}.

\begin{figure}[htbp]
\centering
\begin{minipage}{.62\linewidth}
\includegraphics[width=\linewidth]{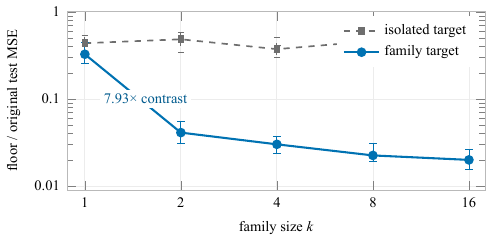}
\end{minipage}
\caption{\textbf{Retained relatives can keep a deleted family member predictable.} Increasing family size from $k=1$ to $k=2$ leaves one relative after deletion and sharply lowers the median normalized retraining loss. Isolated controls remain higher throughout the sweep. Loss is divided by each draw's original-model test MSE, and error bars are $95\%$ bootstrap intervals over $300$ generated datasets and targets per family size. All $90$ families grow together, coupling redundancy with total training size.}
\label{fig:controlled}
\end{figure}

\subsection{Materials representation and reference}
The real-world evaluation uses $44{,}344$ crystal structures extracted from the MP export, filtered by maximum site count $n_{\mathrm{sites}}\le20$, energy above hull $\le0.08$~eV/atom, and formation energy $\le2$~eV/atom. Structures are randomly partitioned into $35{,}476$ training entries and $8{,}868$ test entries, targeting formation energy per atom in eV/atom.

Each crystal structure is parameterized by $134$ features: $94$ elemental composition fractions, $8$ global geometric attributes, and a $32$-bin pair-distance radial histogram spanning $0$--$12$~\AA. Standardized feature vectors $x$ map through a cosine RFF kernel $\phi(x)=\sqrt{2/p}\cos(Wx+b)$, where weights $W_{ij}\sim N(0,\gamma^2)$ and bias $b_j\sim\mathrm{Unif}[0,2\pi)$. Kernel bandwidth $\gamma$ is set to the inverse median non-zero Euclidean distance across $4{,}000$ randomly sampled training pairs. Feature standardization, bandwidth parameters, and RFF projections remain fixed during deletion, ensuring that exact retraining refits linear weights under identical mean-loss penalties. The two fitting regimes and their train/test errors are summarized in Table~\ref{tab:ridge-config}.

\begin{center}\begin{minipage}{\linewidth}\centering\small
\captionof{table}{\textbf{The higher-feature, lower-penalty setting fits both training and test data better.} Both settings use the same train/test partition, and MSE is reported in (eV/atom)$^2$. The table compares fitting regimes with feature count and regularization coupled. Diagnostic requests are sampled separately; the paired comparison uses the same $200$ targets.}
\label{tab:ridge-config}
\begin{tabular}{@{}lrrrr@{}}\toprule
Features $p$ & $\lambda$ & Diagnostic requests & Train MSE & Test MSE\\\midrule
$2{,}048$ & $10^{-6}$ & $200$ & $0.0753784$ & $0.0869666$\\
$8{,}192$ & $10^{-8}$ & $300$ & $0.0166086$ & $0.0703859$\\\bottomrule
\end{tabular}\end{minipage}\end{center}

\subsection{Request sampling and uncertainty}
Diagnostic deletion requests are sampled uniformly without replacement from the training set, reserving a shared cohort of $200$ requests for paired-regime and operating-point evaluations.

The empirical proxy \textsc{FloorScore} identifies the nearest retained structural neighbor in standardized descriptor space. Its regularity estimate $\widehat L$ corresponds to the $99.5$th percentile of prediction difference quotients across $20{,}000$ sampled pairs, while residual estimate $\widehat\varepsilon$ matches the original model's $90$th-percentile absolute training residual. Score tertiles are computed independently within each diagnostic sample. Statistical uncertainty is quantified via $95\%$ percentile bootstrap confidence intervals \citep{efron1979bootstrap}, drawing $10{,}000$ resamples for ridge strata and $2{,}000$ resamples for synthetic, operating-point, and neural diagnostics.

\subsection{Approximate-update settings}
Approximate unlearning procedures initialize from independent copies or refits of the original full-data model, using training data for unlearning updates and reserved held-out data exclusively for test utility evaluation.

\begin{center}\small
\captionof{table}{\textbf{Fixed update settings define the operating points being compared.} Step counts and objectives specify each intervention's strength; all gradient methods use learning rate $10^{-2}$ and retain minibatches of at most $256$ points. The results characterize the selected interventions at these fixed strengths; comparative tuning lies outside this experiment.}
\label{tab:hyper}
\begin{tabular}{@{}lrl@{}}\toprule
Procedure & Steps & Additional setting\\\midrule
Gradient ascent & $30$ & Target squared-loss ascent\\
NegGrad+ \citep{kurmanji2023unbounded} & $30$ & Forget/retain mixing $\alpha=0.5$\\
Retain fine-tuning & $60$ & Retain squared-loss descent\\
SCRUB-style regression & $40$ & 20 ascent, 20 retain; weight $0.3$\\
Newton-style update & $1$ & Hessian damping $10^{-2}$\\
Random relabeling & $40$ & Replacement target drawn from retained labels\\\bottomrule
\end{tabular}\end{center}

Algorithm variants adapt standard unlearning formulations to regression tasks: SCRUB-style regression alternates target loss ascent with retain-task and teacher-prediction matching under squared loss \citep{kurmanji2023unbounded}; random relabeling replaces the target label with a randomly sampled retained label during retain fine-tuning; and Newton-style removal adapts Hessian-based unlearning \citep{guo2020certified} via a damped retained Hessian correction scaled by the forget fraction. Utility impact is reported as median test MSE relative to the original model.

\section{Supplementary results}
\label{app:results}

\subsection{Complete operating-point summaries}
Evaluating approximate unlearning procedures across shared deletion requests isolates differences between target loss shifts, baseline reference agreement, and retained model utility.

\begin{center}\small
\captionof{table}{\textbf{Target-error increases can move an update farther from the retraining reference.} Gradient ascent has the largest median absolute loss gap and test-error ratio among these operating points. A gap near zero indicates similar target losses; a target-loss ratio above one indicates greater error than retraining. Entries summarize $200$ shared requests in $2{,}048$-feature ridge, with brackets giving $95\%$ bootstrap intervals. Gap values are in units of $10^{-3}$ (eV/atom)$^2$, and test ratios use original-model test MSE. The retrain gap and its interval are below $10^{-9}$ in the displayed units.}
\label{tab:all-operating}
\setlength{\tabcolsep}{4pt}
\begin{tabular}{@{}lccc@{}}\toprule
Procedure & Loss gap $\times10^3$ & Target loss / floor & Test ratio\\\midrule
Retrain (reference) & $<10^{-9}$ & $1.000\ [1.000,1.000]$ & $1.000$\\
No update & $1.72\ [1.07,2.93]$ & $0.932\ [0.922,0.939]$ & $1.000$\\
Retain fine-tuning & $1.83\ [1.23,2.86]$ & $0.930\ [0.918,0.938]$ & $1.000$\\
SCRUB-style regression & $3.65\ [2.28,5.38]$ & $1.141\ [1.132,1.156]$ & $1.001$\\
NegGrad+ & $18.85\ [11.58,24.24]$ & $1.645\ [1.631,1.659]$ & $1.013$\\
Gradient ascent & $62.58\ [37.40,78.31]$ & $3.081\ [3.048,3.116]$ & $1.083$\\
Newton-style update & $1.66\ [1.01,2.82]$ & $0.934\ [0.925,0.941]$ & $1.000$\\
Random relabeling & $1.92\ [1.28,2.58]$ & $0.928\ [0.909,0.936]$ & $1.000$\\
\bottomrule\end{tabular}\end{center}

The absolute loss gap $g_i=|e_i-\Floor_i|$ and target loss ratio $q_i=e_i/\Floor_i$ provide complementary perspectives on reference agreement across all eight procedures evaluated on the shared $200$-request cohort (Table~\ref{tab:all-operating}, with hyperparameter configurations specified in Table~\ref{tab:hyper}). The bootstrap intervals overlap for several conservative updates, but this alone establishes neither statistical equivalence nor the absence of paired differences. Distributional retraining equivalence cannot be inferred from these scalar summaries.

\subsection{Exploratory neural descriptor diagnostic}
To examine whether retained-support patterns extend beyond the fixed-feature kernel models, we evaluate single-seed retraining losses from a Crystal Graph Convolutional Neural Network (CGCNN) \citep{xie2018cgcnn} on the same crystal population. The $134$-dimensional descriptors are used only to define the retained-neighbor diagnostics. The CGCNN architecture comprises three message-passing layers of width $64$, processing graphs with up to $12$ neighbors within an $8$~\AA\ cutoff and $41$ Gaussian distance features. Model training proceeds for $60$ epochs using the Adam optimizer \citep{kingma2015adam} (batch size $128$, learning rate $10^{-2}$, zero weight decay), achieving a baseline test MSE of $0.0029175$~(eV/atom)$^2$.

Each of the $200$ deletion requests was evaluated by a separate from-scratch CGCNN retraining run. For request $i$, only record $i$ was removed from the original $35{,}476$-record training split, and the model was trained on the remaining $35{,}475$ records for the full $60$ epochs; the $8{,}868$ held-out test records were excluded from training. Network weights, Adam optimizer state, and the learning-rate scheduler were initialized anew for every run, using the same seed, rather than reusing the original model or a previous request's model. Deletions were evaluated independently, not cumulatively. The reported loss is the squared prediction error at the removed record after that run. These are single-seed realizations of the retraining reference, not averages over training randomness. The request-bootstrap intervals quantify sampling uncertainty across deletion requests, conditional on the fixed training seed; they do not capture variability across training seeds.

For each deleted target, we identify its nearest retained descriptor neighbor under fixed standardization and measure rank associations using Spearman correlation without fitting parameters to deletion losses.

\begin{center}\small
\captionof{table}{\textbf{Neighbor label gaps show a stronger association than descriptor distance.} Larger label gaps are positively associated with higher neural retraining loss. Descriptor distance has little rank association, and its interval includes zero. Entries are Spearman correlations over $200$ requests with $95\%$ request-bootstrap intervals conditional on one training seed. The correlations quantify association within this sample.}
\label{tab:neural-geometry}
\begin{tabular}{@{}lrr@{}}\toprule
Diagnostic & Correlation & $95\%$ interval\\\midrule
Nearest retained descriptor distance & $0.0367$ & $[-0.0913,\,0.1700]$\\
Label gap to that neighbor & $0.2128$ & $[0.0729,\,0.3464]$\\\bottomrule
\end{tabular}\end{center}

Within this single-seed diagnostic, neighbor label disparity shows a stronger positive association with post-deletion loss than raw descriptor distance (Table~\ref{tab:neural-geometry}). Descriptor distance has little rank association ($\rho=0.0367$, with a confidence interval spanning zero), whereas the target label gap shows a positive association ($\rho=0.2128$). This pattern is consistent with local property agreement contributing to retraining predictability in this neural-model setting.

\end{document}